\definecolor{DarkRed}{rgb}{0.75,0,0}
\definecolor{DarkGreen}{rgb}{0,0.5,0}
\definecolor{DarkPurple}{rgb}{0.5,0,0.5}
\definecolor{Dark}{rgb}{0.5,0.5,0}
\definecolor{DarkBlue}{rgb}{0,0,0.5}
\newcommand{\reals}{\mathbb R}
\newcommand{\defeq}{:=}
\newcommand{\Ex}{\mathbb E}
\newtheorem{defn}{Definition}
\newtheorem{cor}{Corollary}
\newtheorem{thm}{Theorem}
\begin{document}

\title{Memory Lens: How Much Memory Does an Agent Use?}
\author{Christoph Dann\\ CMU \\ \texttt{cdann@cdann.net} \And Katja Hofmann \\ Microsoft Research  \\ \texttt{Katja.Hofmann@microsoft.com} \And Sebastian Nowozin \\ Microsoft Research\\ \texttt{senowozi@microsoft.com}}

\maketitle

\begin{abstract}
We propose a new method to study the internal memory used by reinforcement
learning policies.
We estimate the amount of relevant past information by estimating mutual
information between behavior histories and the current action of an agent.
We perform this estimation in the passive setting, that is, we do not
intervene but merely observe the natural behavior of the agent.
Moreover, we provide a theoretical justification for our approach by showing
that it yields an implementation-independent lower bound on the minimal memory
capacity of any agent that implement the observed policy.
We demonstrate our approach by estimating the use of memory of DQN
policies on concatenated Atari frames, demonstrating sharply different use of
memory across 49 games.
The study of memory as information that flows from the past to the current
action opens avenues to understand and improve successful reinforcement
learning algorithms.
\end{abstract}

\section{Introduction}
Can you understand the complexity of an agent just by observing its behavior?
Herbert Simon provided a vivid example by imagining an ant moving along a
beach~\citep{Simon1996}.
He observed that the path the ant takes while walking toward a certain
destination appears highly irregular and hard to describe.
This complexity may indicate a sophisticated decision making in the ant, but
Simon postulated that instead the ant follows very simple rules and the
observed complexity of the path stems from the complexity in the environment.

In this work we aim to understand the complexity of agents acting according to
a fixed policy in an environment.
In particular, we are interested in the \emph{memory} of an agent, that is,
its ability to use past observations to inform future actions.
We do not assume a specific implementation of the agent, but
instead observe its behavior to derive statements about its
memory.

The study of memory in agents is important for two reasons.
\emph{First}, most state-of-the-art reinforcement learning approaches assume
that the environment is a Markov decision process (MDP)~\citep{Puterman1994},
but in contrast most real-world tasks have non-Markov structure and are only
partially observable by the agent.
In such environments, optimal decisions may not only depend on the most recent
observation but on the entire history of interactions. That is, to solve a task
optimally or even just reasonably well, an agent might need to remember all
previous observations and actions taken~\citep{Singh1994a,Krishnamurthy2016}.
\emph{Second}, in many recent algorithms the policy has access to memory of
certain fixed capacity. Popular choices include using a fixed window of
history (e.g. the last four observations as in~\citet{Mnih2015}) or
recurrent neural networks as adaptive memory~\citep{Heess2015,Li2015}.
These approaches are practically successful for specific tasks, but it is
unclear how much memory capacity they actually use and how much memory capacity
to use for different tasks.
Our approach allows us to study the use of memory---measured in bits over
time---independent of the implementation of the agent.

Our method of estimating memory works as follows.
We assume we can observe all interactions of an agent's policy with the environment
in the form of sequences of observation-action-reward triples.
We then estimate the mutual information between actions and
parts of the history.
This approach treats the agent and the environment as black boxes which in principle
also allows the application of the method to humans or animals as agents.

We provide a theoretical justification of the method in two steps.
First, we formally define the minimal \emph{memory capacity}
required to reproduce a given policy.
Second, we connect our practical estimation method with this formal notion by
showing that our method estimates a lower bound to the memory capacity.
To demonstrate the usefulness of our approach 
we analyze the memory capacity of
the state-of-the-art Deep Q-Network policies trained on 49 Atari 2600
games~\citep{Mnih2015}.
In summary our work makes the following contributions:
\begin{compactitem}
\item A practical method for estimating the memory use of an agent's policy from its behavior;
\item A theoretical justification in terms of minimum memory capacity;
\item Insight into the memory use of successful DQN policies on Atari games.
\end{compactitem}

\section{Problem Setting and Notation}
We consider the following setting: an agent interacts at discrete times $t=1, 2,
\dots$ with a stochastic environment by (1) making an observation $X_t \in
\mathcal X$, (2) taking an action $A_t \in \mathcal A$ and (3) receiving a
scalar reward $R_t \in \reals$ at each of these times. 
For notational convenience, we denote the quantities at time $t$ by $Z_t =
(X_t, A_t, R_t)$ and the concatenation of several time steps by $Z_{k:t} = (Z_k, \dots, Z_{t-1}, Z_t)$.  
While $X_t$ and $R_t$ are determined by the environment, the
action is sampled from the agent's policy and may depend on the entire previous
history $Z_1, \dots Z_{t-1}, X_t$.

We assume that the environment is stochastic but not necessarily Markov.
We are interested in agents that have mastered a task, which means that
learning has mostly ended and the policy changes slowly
if at all.  We thus formally assume that the agent's policy is fixed for
notational simplicity.\footnote{This assumption is not crucial. In the case of
changing policies, our results hold for the mixture of policies followed by the
agent.} A trajectory $\xi = (Z_1, Z_2, \dots, Z_n)$ consisting of $n$
time-steps is therefore a random vector sampled from a fixed distribution $\xi
\sim P$. We further assume that $Z_t$ can take only finitely many values.
Given one or more trajectories $\xi_1, \xi_2, \dots \sim
P$ of the agent interacting with the environment, our goal is to estimate
the amount of memory required by any agent that implements the observed policy. 

\section{Method: Memory Lens}
\label{sec:method}
Memory allows the action $A_t$ to depend not only on the most recent
observation $X_t$ but also on the previous history $Z_1, \dots Z_{t-1}$. One intuitive
notion to describe the amount of memory used for some action $A_t$ is 
mutual information of action $A_t$ and history $Z_1, \dots, Z_{t-1}$ given $X_t$, that is, 
$
I(A_t;  Z_{1:t-1} | X_t)=\Ex \left[ D_{KL}( P(A_t | X_t, Z_{1:t-1}) \| P(A_t | X_t)) \right].
$
This \emph{conditional mutual information} quantifies the information in bits or nats
about action $A_t$ one gains by getting to know $Z_{1:t-1}$ when one already
knows the value of $X_t$. If this quantity is zero for all $t$, then the policy
is Markov, that is, the action depends only on the most recent observation.
If it is nonzero, every implementation of the agent has to use at least some
form of memory.
Our approach is to estimate the following mutual information quantities
\begin{align}
    M_0 &\defeq \,I(A_t; X_t); \qquad
    M_1 \defeq \,I(A_t; Z_{t-1} | X_t); \qquad
    M_2 \defeq \,I(A_t; Z_{t-2} | X_t, Z_{t-1});\\
    M_3 &\defeq \,I(A_t; Z_{t-3} | X_t, Z_{t-2:t-1});
    \quad \qquad \dots \qquad \qquad
    M_{t-1} \defeq \,I(A_t; Z_{1} | X_t, Z_{2:t-1}).
\end{align}
Each entry $M_i$ of $M$ quantifies how much additional information about $A_t$
can be gained when considering history of length $i$ instead of only $i-1$. The
first entry $M_0$ is the amount of information $X_t$ shares with $A_t$.

\subsection{Estimating Mutual Information}
Mutual information and conditional mutual information can be written as
differences of entropy terms,
    $I(A_t;  X_t) = H(A_t) + H(X_t) - H(A_t, X_t)$ and
\begin{align}
    I(A_t; Z_{1:t-k} | X_t, Z_{t-k+1:t-1}) 
    = & H(A_t, X_t, Z_{t-k+1:t-1})
    + H(X_t, Z_{t-k:t-1})\\
    & - H(A_t, X_t, Z_{t-k:t-1})
    - H(X_t, Z_{t-k+1:t-1}),
\end{align}
where the entropy of multiple random variables is defined by the entropy of
their joint distribution. We can therefore estimate $M_i$ by estimating the
individual entropy terms.

We use the entropy estimator by
\citet{Grassberger2003} due to its simplicity and computational efficiency;
alternatives~(plug-in, \citet{Nemenman2002,Hausser2009}) yielded very similar results in our
experimental evaluations. The \citet{Grassberger2003} estimate of the entropy
in nats of a random quantity $Y$ of which we have seen $k$ different values,
each $n_1, \dots, n_k$ times, is~\citep{nowozin2012}
    $
    \hat H(Y) = \ln(N) - \frac{1}{N} \sum_{i=1}^k G(n_i),
$
where $N = \sum_i n_i$ is the total number of samples and 
    $
    G(n) = \psi(n) + \frac{(-1)^{n}}{2} \left( \psi\left(\frac {n+1}{2} \right) - \psi \left( \frac n 2 \right) \right),
$
\label{sec:estimation}
with $\psi$ being the \emph{digamma function}.

\subsection{Test of Significance}
\label{sec:permtest}
In practice the sample size available for estimating the conditional mutual
information is limited.
Therefore, our estimates will be affected both by bias and statistical variation.
To prevent invalid conclusions due to bias and variation we use a simple permutation test as follows.
We take the original set of samples $\{(Z_{t-i:t-1}, X_t, A_t)\}$ for
estimating the conditional mutual information $\hat M_i$ and replace the last
action by sampling a new action $\tilde A_t \sim \hat p(A_t)$ from the
empirical marginal of $A_t$. We then compute the conditional mutual information $\tilde M_i$ 
w.r.t. the modified samples $\{(Z_{t-i:t-1}, X_t, \tilde A_t)\}$. This
action-resampling process is repeated $100$ times to obtain the ordered sequence $\tilde M_i^{(1)},
\dots \tilde M_i^{(100)}$ with $\tilde M_i^{(j)} \leq \tilde M_i^{(j+1)}$ for
all $j$. We consider memory use significant if $\hat M_i$ is above the $95\%$
percentile of this set, i.e., $\hat M_i \geq \tilde M_i^{(95)}$.

\vspace{-2mm}
\section{Experimental Results}
\label{sec:experiments}
We trained Deep Q-Network policies
for $50$ million time steps on 49 Atari games. The network structure as well as
all learning parameters have been chosen to match the setting by \citet{Mnih2015}.
Each policy chooses with probability $\epsilon=0.05$ an available action
uniformly at random and otherwise takes the action that maximizes the learned
Q-function. The Q-function takes as input the last four frames of the games as
$84\times 84$ pixel grayscale images. We can interpret this as the agent having
memory to perfectly store the last 4 observations. We applied our memory lens
method and estimated to what extent each of these observations are actually used
when making decisions in each game.

We recorded $10000$ games played by each of the fully trained 49 policies. For
each policy we used these $10000$ trajectories of length up to at most $10000$
time steps to estimate the memory use.
Since we know that the policies are stationary, we expect their use of memory
to be fairly stationary too.  We therefore did not estimate $M$ for the actions at
each time $t$ individually but aggregated samples for all actions.

The results are shown in Figure~\ref{fig:cmi_atari}.
The top bar plot shows the mutual information estimate $\hat M_0$ of action and
most recent frame (requires no memory) and the plots below show $\hat M_1$, $\hat M_2$ and $\hat
M_3$ for the policies of each game. Only bars are displayed that indicate statistically
significant dependencies (see Section~\ref{sec:permtest}).

\begin{figure}[t]
    \includegraphics[width=\textwidth]{./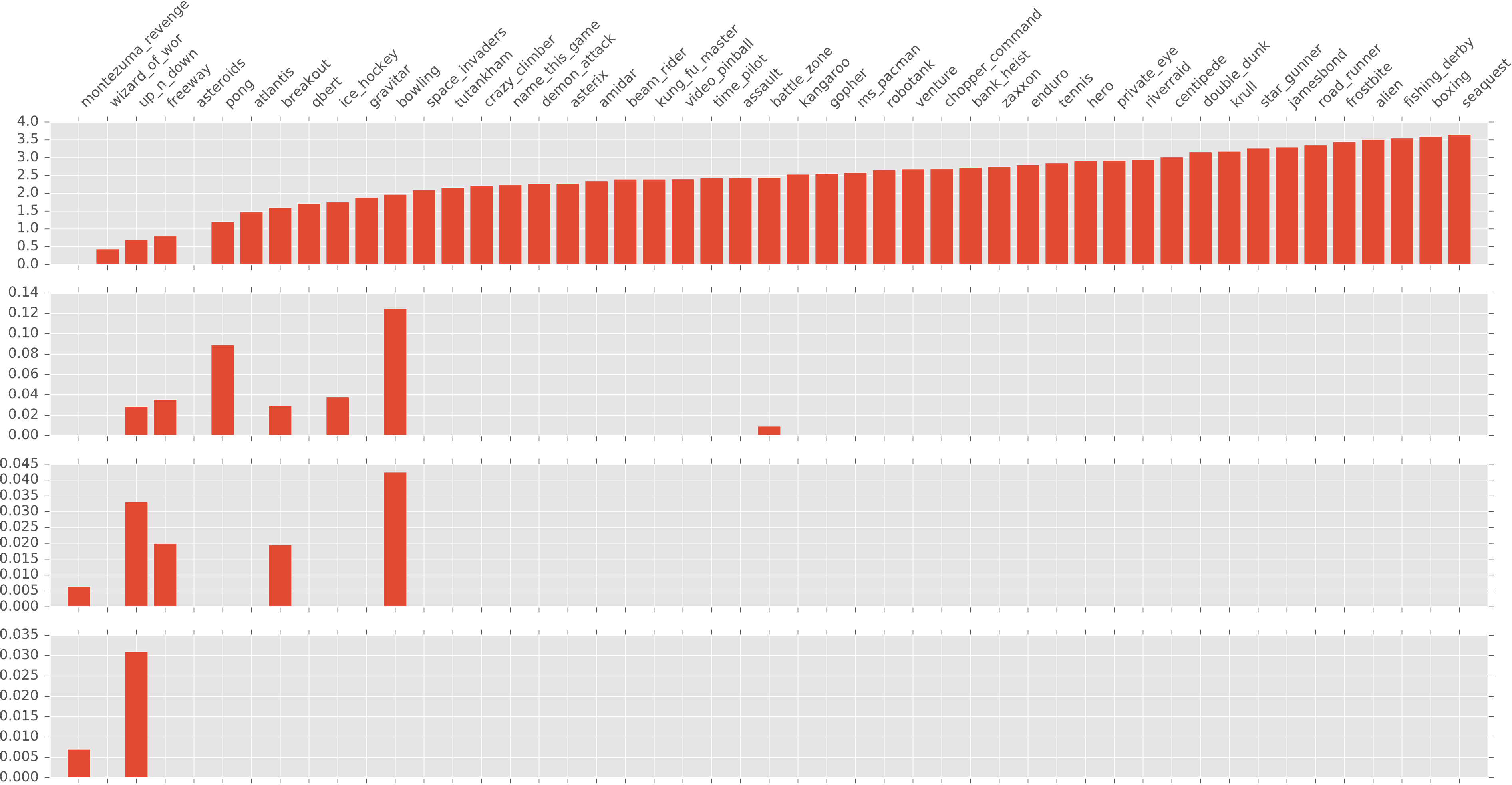}
    \caption{Estimated use of memory by DQN policies on 49 Atari 2600 games. The
    top bar chart shows the mutual information between the chosen action and
    the current game screen; The three plots below visualize the additional
    information used by the policies from previous frames in temporally decreasing
    order. 
}
    \label{fig:cmi_atari}
\end{figure}

\vspace{-3mm}
\section{Formalization of Memory}
\label{sec:theory}
In this section we provide a more formal definition of amount of memory
required to implement an agent's behavior and relate the quantities $M$
estimated by our memory lens approach to it. For the sake of conciseness, we
focus on finite-horizon episodic decision problems with a fixed horizon of $H$.
A single episode $\xi$ is then an element of $\mathcal Z^H$. We use the short-hand notation $[H] = \{1, 2, 3, \dots, H\}$.
Assume an abstract model of memory where the state of memory can take $K \in \mathbb N$ different values.
Following the formalization by \citet{Chatterjee2010}, we define:
\begin{defn}
A \emph{memory function} $g : [H] \times \mathcal Z \times [K] \rightarrow [K]$
maps for each time step the previous memory and current observations to a new
memory configuration.
The policy $\pi$ of an agent can be implemented with $K$ memory if and only if
there is a memory function $g$ so that
   $ P(A_t | X_t, Y_{t-1}) = P(A_t | X_t, Z_{1:t-1})$
for all $t \in [H]$ where $Y_t$ denotes the memory configuration after time
$t$. That is $Y_t = g(t, Z_t, Y_{t-1})$ for $t \in [H]$ and $Y_0 = 1$. We say
that $g$ is a memory function for $\pi$ in this case and denote the
set of memory functions with capacity $K$ for $\pi$ by $\mathcal M_{M,
\pi}$.
    \label{def:memfun}
\end{defn}
Note that this abstract model of memory is very general and, for example,
recurrent neural networks can be considered a direct implementation of it.
We can
formally define the minimum amount of memory required by a policy as
\begin{defn}
    The memory capacity $\mathcal C(\pi)$ of a policy $\pi$ is the smallest
    amount of memory capacity required to reproduce this policy. Formally, it
    is    
        $\mathcal C(\pi) =  \min \{ K \in \mathbb N \, : \, \mathcal M_{K, \pi} \neq \emptyset \}$.
\end{defn}
If the distribution $P$ from which the episodes are sampled from is known,
$\mathcal C(\pi)$ can be computed in finite time (there are only finitely many
memory functions). 
This implies that in principle $\mathcal C(\pi)$ can be
estimated just from observations by estimating $P$ and then computing $\mathcal
C(\pi)$. However, to determine $\mathcal C(\pi)$ one has to perform tests on
equality of conditional probabilities (condition in
Def.~\ref{def:memfun}). Each conditional probability has to be
estimated accurately which requires infeasibly many samples for any problem
beyond simple toy settings.

Instead, the mutual information quantities $M$ used in our method are much easier to estimate.

\subsection{$\sum _{i>0}M_i$ is a lower bound on $\log \mathcal C(\pi)$}

The next corollary states that for any $t$ the sum of all conditional
mutual quantities $M_1, \dots, M_{t-1}$ estimated by our memory lens approach (excluding $M_0$) is a lower bound on
the log-memory capacity.
\begin{cor}
    For all $t \in [H]$, 
        $\sum_{i=1}^{t-1} M_i = \sum_{i=1}^{t-1} I(A_t; Z_{i} | X_t, Z_{i+1:t-1}) \leq \log \mathcal C(\pi)$.
\end{cor}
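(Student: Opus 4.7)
The plan is to reduce the sum to a single conditional mutual information via the chain rule, and then upper bound that quantity using any optimal memory function.

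First I would apply the chain rule for mutual information repeatedly to telescope the sum:
\begin{align}
\sum_{i=1}^{t-1} I(A_t; Z_{t-i} \mid X_t, Z_{t-i+1:t-1}) = I(A_t; Z_{1:t-1} \mid X_t).
\end{align}
This identity is the standard decomposition of $I(A_t; Z_{1:t-1} \mid X_t)$ obtained by adding the $Z_i$ one at a time, starting from $i = t-1$ and moving backwards to $i = 1$. So the whole problem reduces to showing $I(A_t; Z_{1:t-1} \mid X_t) \leq \log \mathcal{C}(\pi)$.

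Next, let $K = \mathcal{C}(\pi)$. By the definition of the memory capacity, there exists a memory function $g$ with $K$ states such that $P(A_t \mid X_t, Y_{t-1}) = P(A_t \mid X_t, Z_{1:t-1})$, where $Y_{t-1}$ is the memory state produced by $g$ from $Z_{1:t-1}$. This equality expresses precisely that $A_t$ is conditionally independent of $Z_{1:t-1}$ given $(X_t, Y_{t-1})$, so $I(A_t; Z_{1:t-1} \mid X_t, Y_{t-1}) = 0$. Since $Y_{t-1}$ is a deterministic function of $Z_{1:t-1}$, including it in the conditioning after $Z_{1:t-1}$ is costless, so $I(A_t; Z_{1:t-1} \mid X_t) = I(A_t; Z_{1:t-1}, Y_{t-1} \mid X_t)$. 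Applying the chain rule on the right,
\begin{align}
I(A_t; Z_{1:t-1} \mid X_t) = I(A_t; Y_{t-1} \mid X_t) + I(A_t; Z_{1:t-1} \mid X_t, Y_{t-1}) = I(A_t; Y_{t-1} \mid X_t).
\end{align}

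Finally, I would bound $I(A_t; Y_{t-1} \mid X_t) \leq H(Y_{t-1} \mid X_t) \leq H(Y_{t-1}) \leq \log K = \log \mathcal{C}(\pi)$, where the last inequality uses that $Y_{t-1}$ takes at most $K$ values. Combining with the telescoping identity gives the claim. The only mildly delicate step is justifying that $A_t$ is conditionally independent of $Z_{1:t-1}$ given $(X_t, Y_{t-1})$ strongly enough to eliminate the residual term in the chain rule, but this is immediate from Definition~\ref{def:memfun}, so no real obstacle arises; the argument is a clean combination of the chain rule and the information-theoretic sufficiency of the memory state.
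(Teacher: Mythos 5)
Your proof is correct and follows essentially the same route as the paper: telescope the sum via the chain rule into $I(A_t; Z_{1:t-1} \mid X_t)$, substitute the memory state $Y_{t-1}$ for the history using the defining property of a memory function, and bound the result by $H(Y_{t-1}) \leq \log \mathcal C(\pi)$. The only difference is that the paper packages the substitution-and-entropy-bound step as a stronger theorem (allowing restriction to an event $E$ and intermediate times $k$), of which your argument is the special case $k = t-1$, $E = \Omega$.
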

\vspace{-2mm}
Instead of proving this corollary directly, we show a stronger version in the
theorem below.
This theorem allows to restrict the measure $P$ to an event that can be decided
based on the history and shows that this still results in a valid lower bound
on $\log \mathcal C(\pi)$. In some scenarios one might have prior intuition
when an agent uses memory to make a decision. One can then restrict the measure
to the event $E$ where one expects the memory use will be higher than in
$\Sigma \setminus E$ and obtain a possibly tighter lower bound on $\log
\mathcal C(\pi)$. 
\begin{thm}
    Let $k < t$ and $E \in \sigma(Z_{1:t-1}, X_t)$ be an event in the
    sigma-field generated by the history up to time $t-1$ and the observation
    at time $t$. Denote by $P_E$ the probability measure that restricts $P$ to
    $E$. Then
    $    
    I_E(A_t; Z_{1:k}  | X_t, Z_{k+1:t-1}) \leq \min_{g \in \mathcal M_{\infty, \pi}} \log | g(k, \mathcal Z, [H]) | \leq \log C(\pi),
$    
    where $I_E$ denotes the (conditional) mutual information with respect to $P_E$.
    \label{thm:lb}
\end{thm}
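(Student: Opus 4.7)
The plan is to use the memory state as a sufficient statistic and apply a data-processing argument. The rightmost inequality is immediate from the definitions: with $K = \mathcal C(\pi)$ there exists $g \in \mathcal M_{K,\pi} \subseteq \mathcal M_{\infty,\pi}$ whose range at time $k$ has cardinality at most $K$, so the minimum is bounded above by $\log \mathcal C(\pi)$.

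For the main (left) inequality, fix any $g \in \mathcal M_{\infty,\pi}$ and form the associated memory process $Y_t = g(t, Z_t, Y_{t-1})$ with $Y_0 = 1$. Two structural observations carry the argument: (i) $Y_k$ is a deterministic function of $Z_{1:k}$, and (ii) iterating $g$ shows that $Y_{t-1}$ is a deterministic function of $Y_k$ and $Z_{k+1:t-1}$. From (ii), conditioning on $(X_t, Z_{k+1:t-1}, Y_k)$ determines $Y_{t-1}$, and combined with the defining identity $P(A_t \mid X_t, Y_{t-1}) = P(A_t \mid X_t, Z_{1:t-1})$ from Definition~\ref{def:memfun} this yields the conditional independence $A_t \perp Z_{1:k} \mid (X_t, Z_{k+1:t-1}, Y_k)$ under $P$. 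Since $E \in \sigma(Z_{1:t-1}, X_t)$ only reweights histories and does not involve $A_t$, the same conditional independence transfers to $P_E$.

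With this in hand the bound is routine. By the chain rule and non-negativity of conditional mutual information,
$$I_E(A_t; Z_{1:k} \mid X_t, Z_{k+1:t-1}) \;\leq\; I_E(A_t; Z_{1:k}, Y_k \mid X_t, Z_{k+1:t-1}) \;=\; I_E(A_t; Y_k \mid X_t, Z_{k+1:t-1}),$$
where the equality uses the conditional independence above. Bounding mutual information by entropy and entropy by log-cardinality,
$$I_E(A_t; Y_k \mid X_t, Z_{k+1:t-1}) \;\leq\; H_E(Y_k) \;\leq\; \log \bigl| g(k, \mathcal Z, [K]) \bigr|,$$
and taking the minimum over $g \in \mathcal M_{\infty, \pi}$ gives the claim.

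The main place that requires care, and what I expect to be the principal obstacle, is checking that the restriction to $E$ preserves everything cleanly. Concretely one has to verify that (a) $g$ is still a valid memory function under $P_E$, i.e.\ $P_E(A_t \mid X_t, Y_{t-1}) = P_E(A_t \mid X_t, Z_{1:t-1})$ on $E$, so that the sufficient-statistic property survives, and (b) the $P_E$-range of $Y_k$ is contained in the $P$-range, so the cardinality bound still applies. Both are essentially trivial because $E$ is measurable with respect to the history and does not constrain $A_t$, but this is the step one must not skip.
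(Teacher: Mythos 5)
Your proof is correct and takes essentially the same route as the paper's: use the memory process $Y$, show that $Y_k$ together with $(X_t, Z_{k+1:t-1})$ is a sufficient statistic for $A_t$, verify this survives restriction to $E$ (since $E \in \sigma(Z_{1:t-1}, X_t)$ does not involve $A_t$), replace $Z_{1:k}$ by $Y_k$ inside the conditional mutual information, and bound by entropy and log-cardinality of the range of $Y_k$. The only cosmetic difference is that you justify the substitution $I_E(A_t; Z_{1:k} \mid X_t, Z_{k+1:t-1}) = I_E(A_t; Y_k \mid X_t, Z_{k+1:t-1})$ via the chain rule plus conditional independence, whereas the paper writes out the equivalent identity of probability ratios; you also make explicit the rightmost inequality and the transfer to $P_E$, which the paper leaves implicit.
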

\begin{proof}
    Since $Y_t$ is a function of $Y_k$ and $Z_{k+1:t}$ for any $k < t$, the
    generated sigma-fields satisfy $\sigma(Z_{1:t-1}) \supseteq \sigma(Y_k,
    Z_{k+1:t-1}) \supseteq \sigma(Y_{t-1})$. From $P(A_t | X_t, Y_{t-1}) =
    P(A_t | X_t, Z_{1:t-1})$, it follows that $P_E(A_t | X_t, Y_{t-1}) =
    P_E(A_t | X_t, Z_{1:t-1})$ and hence $P_E(A_t | X_t, Y_{k},
    Z_{k+1:t-1}) = P_E(A_t | X_t, Z_{1:t-1})$.
    We can then equivalently write
    \begin{align*}
        \frac{P_E(A_t , X_t, Y_{k}, Z_{k+1:t-1}) P_E(X_t, Z_{k+1:t-1})}{ P_E(X_t, Z_{k+1:t-1}, Y_k) P_E(X_t, Z_{k+1:t-1}, A_t)}
        = \frac{P_E(A_t, X_t, Z_{1:t-1}) P_E(X_t, Z_{k+1:t-1})}{P_E(X_t, Z_{1:t-1}) P_E(X_t, Z_{k+1:t-1}, A_t)}
    \end{align*}
    which implies that 
        $I_E(A_t; Z_{1:k} | X_t, Z_{k+1:t-1}) = I_E(A_t; Y_k | X_t, Z_{k+1:t-1})$.
    We then can bound the conditional mutual information using basic properties of entropies as
    \begin{align*}
        &I_E(A_t; Z_{1:k} | X_t, Z_{k+1:t-1})
        = I_E(A_t; Y_k | X_t, Z_{k+1:t-1}) \\
        =\,\,&  H_E(Y_k | X_t, Z_{k+1:t-1}) - H_E(Y_k |A_t, X_t, Z_{k+1:t-1}) 
        \leq H_E(Y_k | X_t, Z_{k+1:t-1}) \\
        \leq\,\, & H_E(Y_k)
       \leq \log | Y_k(E) | \leq \log | Y_k(\Omega) |.
     \end{align*}
     \vspace{-.4cm}
\end{proof}

\section{Related Work}
\citet{Papapetrou2016} perform
statistical tests based on conditional mutual information to identify the order of Markov chains.
This is similar to our method but we are only interested on parts of the stochastic process, namely the
agent's actions.
In the work of \citet{Tishby2011} mutual information is used as part of an
optimization objective for policies. Instead of just for maximum cumulative
reward, they optimize for the best trade-off between information processing
cost and cumulative reward. 

Our definition of a \emph{memory function} matches the one by
\citet{Chatterjee2010}.  While we are concerned with the analysis of our
method, they use memory functions for asymptotic theoretical analysis of memory
required to solve POMDPs with parity objectives.
In the abstract model of memory in Section~\ref{sec:theory}, the memory state is
essentially a sufficient statistic summarizing all information from the past
relevant for any action in the future. Sufficient statistics for general stochastic
processes are discussed by
\citet{Shalizi2001a} introducing the concept of $\epsilon$-machines. Unlike $\epsilon$-machines, we require memory to be
updated recursively and we are only concerned with the predictive power
regarding future actions.

\vspace{-3mm}
\section{Conclusion}
\label{sec:conclusion}

In this paper, we have proposed an approach for analyzing memory
use of an agent that interacts with an environment. 
We have provided both a theoretical foundation of our method and demonstrated
its effectiveness in an analysis of state-of-the-art DQN policies playing Atari
games.
Our treatment of memory usage in agents opens up a wide range of
directions for follow-up work. First, our method assumes discrete observation
and action spaces. The key challenge in extending to continuous space is the
need to efficiently compute mutual information of high-dimensional, continuous
observations. A promising avenue is to explore approximations that have been
developed in domains such as neural coding, such as variational information
maximization \citep{agakov2004variational}.

Another interesting question to explore is whether the estimate of memory use
by a policy can be improved when the environment can be controlled actively.
That is, the behavior of an agent can actively be explored by manipulating the
observations and rewards the agent receives. The task of identifying the events
in which the agents requires the maximum amount of memory by manipulating its
observations could possibly be set up as a reinforcement learning task itself.
Further, estimating the amount of memory necessary to solve a task could
potentially be used as an empirical measure for difficulty of sequential
decision making tasks. Many real-world tasks require high-level reasoning with
longer-term memory. While current reinforcement learning algorithms still
mostly fail to achieve reasonable performance on such tasks, often experts,
e.g. humans, can be observed when solving the task.  Analyzing their memory use
could not only give an indication of how difficult a task is but also possibly
inform the design of successful reinforcement learning architectures. 

\bibliographystyle{unsrtnat-nourl}
\bibliography{cdann_mendeley,manual}

\end{document}